\newcommand{\bx}{\mathbf{x}}
\newcommand{\bz}{\mathbf{z}}
\newcommand{\be}{\mathbf{e}}
\newcommand{\bs}{\mathbf{s}}
\newcommand{\bzero}{\mathbf{0}}
\newcommand{\bp}{\mathbf{p}}
\newcommand{\bq}{\mathbf{q}}
\newcommand{\bu}{\mathbf{u}}
\newcommand{\bv}{\mathbf{v}}
\newcommand{\bt}{\mathbf{t}}
\newcommand{\cM}{\mathcal{M}}
\newcommand{\cB}{\mathcal{B}}
\newcommand{\bH}{\mathbf{H}}
\newcommand{\cI}{\mathcal{I}}
\newcommand{\cD}{\mathcal{D}}
\newcommand{\cO}{\mathcal{O}}
\newcommand{\bP}{\mathbf{P}}
\newcommand{\cC}{\mathcal{C}}
\newcommand{\cA}{\mathcal{A}}
\newcommand{\sfZ}{\mathsf{Z}}
\def\argmax{\operatorname*{argmax\,}}
\begin{document}
\pagestyle{headings}
\mainmatter

\def\ACCV20SubNumber{139}  % Insert your submission number here

%===========================================================
\title{Quantum Robust Fitting} % Replace with your title
\titlerunning{Quantum Robust Fitting}
% If the paper title is too long for the running head, you can set
% an abbreviated paper title here
%
\author{Tat-Jun Chin\inst{1}\orcidID{0000-0003-2423-9342} \and
David Suter\inst{2}\orcidID{0000-0001-6306-3023} \and \\
Shin-Fang Ch'ng\inst{1}\orcidID{0000-0003-1092-8921} \and
James Quach\inst{3}\orcidID{0000-0002-3619-2505} }
\authorrunning{T.-J. Chin et al.}
% First names are abbreviated in the running head.
% If there are more than two authors, 'et al.' is used.
%
\institute{School of Computer Science, The University of Adelaide \and
School of Computing and Security, Edith Cowan University \and
School of Physical Sciences, The University of Adelaide}

\maketitle

%===========================================================
\begin{abstract}
Many computer vision applications need to recover structure from imperfect measurements of the real world. The task is often solved by robustly fitting a geometric model onto noisy and outlier-contaminated data. However, recent theoretical analyses indicate that many commonly used formulations of robust fitting in computer vision are not amenable to tractable solution and approximation. In this paper, we explore the usage of quantum computers for robust fitting. To do so, we examine and establish the practical usefulness of a robust fitting formulation inspired by the analysis of monotone Boolean functions. We then investigate a quantum algorithm to solve the formulation and analyse the computational speed-up possible over the classical algorithm. Our work thus proposes one of the first quantum treatments of robust fitting for computer vision.
\end{abstract}

%===========================================================
\section{Introduction}

%\epigraph{Far better an approximate answer to the right question, which is often vague, than an exact answer to the wrong question, which can always be made precise.}{John Tukey}
Curve fitting is vital to many computer vision capabilities~\cite{hartnett18}. We focus on the special case of ``geometric" curve fitting~\cite{kanatani16}, where the curves of interest derive from the fundamental constraints that govern image formation and the physical motions of objects in the scene. Geometric curve fitting is conducted on visual data that is usually contaminated by outliers, thus necessitating robust fitting.

To begin, let $\cM$ be a geometric model parametrised by a vector $\bx \in \mathbb{R}^d$. For now, we will keep $\cM$ generic; specific examples will be given later. Our aim is to fit $\cM$ onto $N$ data points $\cD = \{ \bp_i \}^{N}_{i=1}$, i.e., estimate $\bx$ such that $\cM$ describes $\cD$ well. To this end, we employ a residual function
\begin{align}
r_i(\bx)
\end{align}
which gives the nonnegative error incurred on the $i$-th datum $\bp_i$ by the instance of $\cM$ that is defined by $\bx$. Ideally we would like to find an $\bx$ such that $r_i(\bx)$ is small for \emph{all} $i$.

However, if $\cD$ contains outliers, there are no $\bx$ where all $r_i(\bx)$ can be simultaneously small. To deal with outliers, computer vision practitioners often maximise the consensus
\begin{align}\label{eq:consensus}
\Psi(\bx) = \sum^{N}_{i=1} \mathbb{I}( r_i(\bx) \le \epsilon )
\end{align}
of $\bx$, where $\epsilon$ is a given inlier threshold, and $\mathbb{I}$ is the indicator function that returns $1$ if the input predicate is true and $0$ otherwise. Intuitively, $\Psi(\bx)$ counts the number of points that agree with $\bx$ up to threshold $\epsilon$, which is a robust criterion since points that disagree with $\bx$ (the outliers) are ignored~\cite{chin17}. The maximiser $\bx^\ast$, called the maximum consensus estimate, agrees with the most number of points.

To maximise consensus, computer vision practitioners often rely on randomised sampling techniques, i.e., RANSAC~\cite{fischler81} and its variants~\cite{raguram13}. However, random sampling cannot guarantee finding $\bx^\ast$ or even a satisfactory alternative. In fact, recent analysis~\cite{chin18} indicates that there are no efficient algorithms that can find $\bx^\ast$ or bounded-error approximations thereof. In the absence of algorithms with strong guarantees, practitioners can only rely on random sampling methods~\cite{fischler81,raguram13} with supporting heuristics to increase the chances of finding good solutions.

Robust fitting is in fact intractable in general. Beyond maximum consensus, the fundamental hardness of robust criteria which originated in the statistics community (e.g., least median squares, least trimmed squares) have also been established~\cite{bernholt06}. Analysis on robust objectives (e.g., minimally trimmed squares) used in robotics~\cite{tzoumas19} also point to the intractability and inapproximability of robust fitting.

In this paper, we explore a robust fitting approach based on ``influence" as a measure of outlyingness recently introduced by Suter et al.~\cite{suter2020monotone}. Specifically, we will establish
\begin{itemize}%[topsep=1pt,parsep=0em,itemsep=1pt]
	\item The practical usefulness of the technique;
	\item A probabilistically convergent classical algorithm; and
	\item A quantum algorithm to speed up the classical method, thus realising quantum robust fitting.
\end{itemize}

\subsection{Are all quantum computers the ``same"?}

Before delving into the details, it would be useful to paint a broad picture of quantum computing due to the unfamiliarity of the general computer vision audience to the topic.

At the moment, there are no practical quantum computers, although there are several competing technologies under intensive research to realise quantum computers. The approaches can be broadly classified into ``analog" and ``digital" quantum computers. In the former type, adiabatic quantum computers (AQC) is a notable example. In the latter type, (universal) gate quantum computers (GQC) is the main subject of research, in part due to its theoretically proven capability to factorise integers in polynomial time (i.e., Shor's algorithm). Our work here is developed under the GQC framework.

There has been recent work to solve computer vision problems using quantum computers, in particular~\cite{neven08,nguyen19,golyanik20}. However, these have been developed under the AQC framework, hence, the algorithms are unlikely to be transferrable easily to our setting. Moreover, they were not aimed at robust fitting, which is our problem of interest.

%===========================================================

\section{Preliminaries}\label{sec:prelim}

Henceforth, we will refer to a data point $\bp_i$ via its index $i$. Thus, the overall set of data $\cD$ is equivalent to $\{1,\dots,N \}$ and subsets thereof are $\cC \subseteq \cD = \{1,\dots,N\}$.

We restrict ourselves to residuals $r_i(\bx)$ that are quasiconvex~\cite{kahl08} (note that this does not reduce the hardness of maximum consensus~\cite{chin18}). Formally, if the set
\begin{align}
\{ \bx \in \mathbb{R}^d \mid r_i(\bx) \le \alpha \}
\end{align}
is convex for all $\alpha \ge 0$, then $r_i(\bx)$ is quasiconvex. It will be useful to consider the minimax problem
\begin{align}\label{eq:minimax}
g(\cC) = \min_{\bx \in \mathbb{R}^d}~\max_{i \in \cC }~r_i(\bx),
\end{align}
where $g(\cC)$ is the minimised maximum residual for the points in the subset $\cC$. If $r_i(\bx)$ is quasiconvex then~\eqref{eq:minimax} is tractable in general~\cite{eppstein05}, and $g(\cC)$ is monotonic, viz.,
\begin{align}
\cB \subseteq \cC \subseteq \cD \implies g(\cB) \le g(\cC) \le g(\cD).
\end{align}
Chin et al.~\cite{chin15,chin18} exploited the above properties to develop a fixed parameter tractable algorithm for maximum consensus, which scales exponentially with the outlier count.

A subset $\cI \subseteq \cD$ is a consensus set if there exists $\bx\in\mathbb{R}^d$ such that $r_i(\bx) \le \epsilon$ for all $i \in \cI$. Intuitively, $\cI$ contains points that can be fitted within error $\epsilon$. In other words
\begin{align}
g(\cI) \le \epsilon
\end{align}
if $\cI$ is a consensus set. The set of all consensus sets is thus
\begin{align}
\mathbb{F} = \left\{ \cI \subseteq \cD \mid g(\cI) \le \epsilon \right\}.
\end{align}
The consensus maximisation problem can be restated as
\begin{align}\label{eq:maxcon}
\cI^\ast = \underset{\cI \in \mathbb{F} }{\argmax}~|\cI|,
\end{align}
where $\cI^\ast$ is the maximum consensus set. The maximum consensus estimate $\bx^\ast$ is a ``witness" of $\cI^\ast$, i.e., $r_i(\bx^\ast) \le \epsilon$ for all $i \in \cI^\ast$, and $|\cI^\ast| = \Psi(\bx^\ast)$.

%===========================================================

\section{Influence as an outlying measure}

%Define the binary vector
%\begin{align}
%\bz = \left[ z_1, \dots, z_N \right] \in \{ 0,1\}^N
%\end{align}
%whose role is to select subsets of $\cD$, where $z_i = 1$ implies that $\bp_i$ is selected and $z_i = 0$ means otherwise. Define
%\begin{align}
%\bz^\cC = \left[ z^\cC_1, \dots, z^\cC_N \right] \in \{ 0,1\}^N
%\end{align}
%as the binary vector where
%\begin{align}
%z^\cC_i = \begin{cases} 0 & \textrm{if}~i \notin \cC; \\ 1 & \textrm{otherwise}. \end{cases}
%\end{align}
%Conversely, define $\cC^{\bz}$ as the set
%\begin{align}
%\cC^\bz = \{ i \in \cD \mid z_i = 1 \}
%\end{align}
%or indices of the points selected by $\bz$. A special case of $\bz$ is
%\begin{align}
%\be_i = \bz^{\{ i\}},
%\end{align}
%i.e., the vector with all elements zero except the $i$-th one.

Define the binary vector
\begin{align}
\bz = \left[ z_1, \dots, z_N \right] \in \{ 0,1\}^N
\end{align}
whose role is to select subsets of $\cD$, where $z_i = 1$ implies that $\bp_i$ is selected and $z_i = 0$ means otherwise. Define $\bz_\cC$ as the binary vector which is all zero except at the positions where $i \in \cC$. A special case is
\begin{align}
\be_i = \bz_{\{ i\}},
\end{align}
i.e., the binary vector with all elements zero except the $i$-th one. Next, define
\begin{align}
\cC_\bz = \{ i \in \cD \mid z_i = 1 \},
\end{align}
i.e., the set of indices where the binary variables are $1$ in $\bz$.

Define \emph{feasibility test} $f: \{0,1 \}^{N} \mapsto \{0,1 \}$ where
\begin{align}\label{eq:feasibility}
f(\bz) = \begin{cases} 0 & \textrm{if}~g(\cC_\bz) \le \epsilon; \\ 1 & \textrm{otherwise}. \end{cases}
\end{align}
Intuitively, $\bz$ is feasible ($f(\bz)$ evaluates to $0$) if $\bz$ selects a consensus set of $\cD$. The \emph{influence} of a point $\bp_i$ is
\begin{align}\label{eq:influence}
\begin{aligned}
\alpha_i &= Pr\left[ f(\bz \oplus \be_i ) \ne f(\bz) \right]\\
&= \frac{1}{2^N} \left| \{ \bz \in \{0,1 \}^N \mid  f(\bz \oplus \be_i ) \ne f(\bz)  \} \right|.
\end{aligned}
\end{align}
In words, $\alpha_i$ is the probability of changing the feasibility of a subset $\bz$ by inserting/removing $\bp_i$ into/from $\bz$. Note that~\eqref{eq:influence} considers all $2^N$  instantiations of $\bz$.

The utility of $\alpha_i$ as a measure of outlyingness was proposed in~\cite{suter2020monotone}, as we further illustrate with examples below. Computing $\alpha_i$ will be discussed from Sec.~\ref{sec:classical} onwards.

Note that a basic requirement for $\alpha_i$ to be useful is that an appropriate $\epsilon$ can be input by the user. The prevalent usage of the consensus formulation~\eqref{eq:consensus} in computer vision~\cite{chin17} indicates that this is usually not a practical obstacle. 

\subsection{Examples}\label{sec:examples}

\paragraph{Line fitting}

The model $\cM$ is a line parametrised by $\bx \in \mathbb{R}^2$, and each $\bp_i$ has the form
\begin{align}
\bp_i = ( a_i, b_i ).
\end{align}
The residual function evaluates the ``vertical" distance
\begin{align}
r_i(\bx) = \left| [a_i,1]\bx - b_i \right|
\end{align}
from the line to $\bp_i$. The associated minimax problem~\eqref{eq:minimax} is a linear program~\cite{cheney66}, hence $g(\cC)$ can be evaluated efficiently. 

Fig.~\ref{fig:linefitting_data} plots a data instance $\cD$ with $N = 100$ points, while Fig.~\ref{fig:linefitting_influence} plots the sorted normalised influences of the points. A clear dichotomy between inliers and outliers can be observed in the influence.

\begin{figure}%[ht]
	\centering
	\subfigure[Points on a plane.]{\includegraphics[width=0.45\textwidth]{./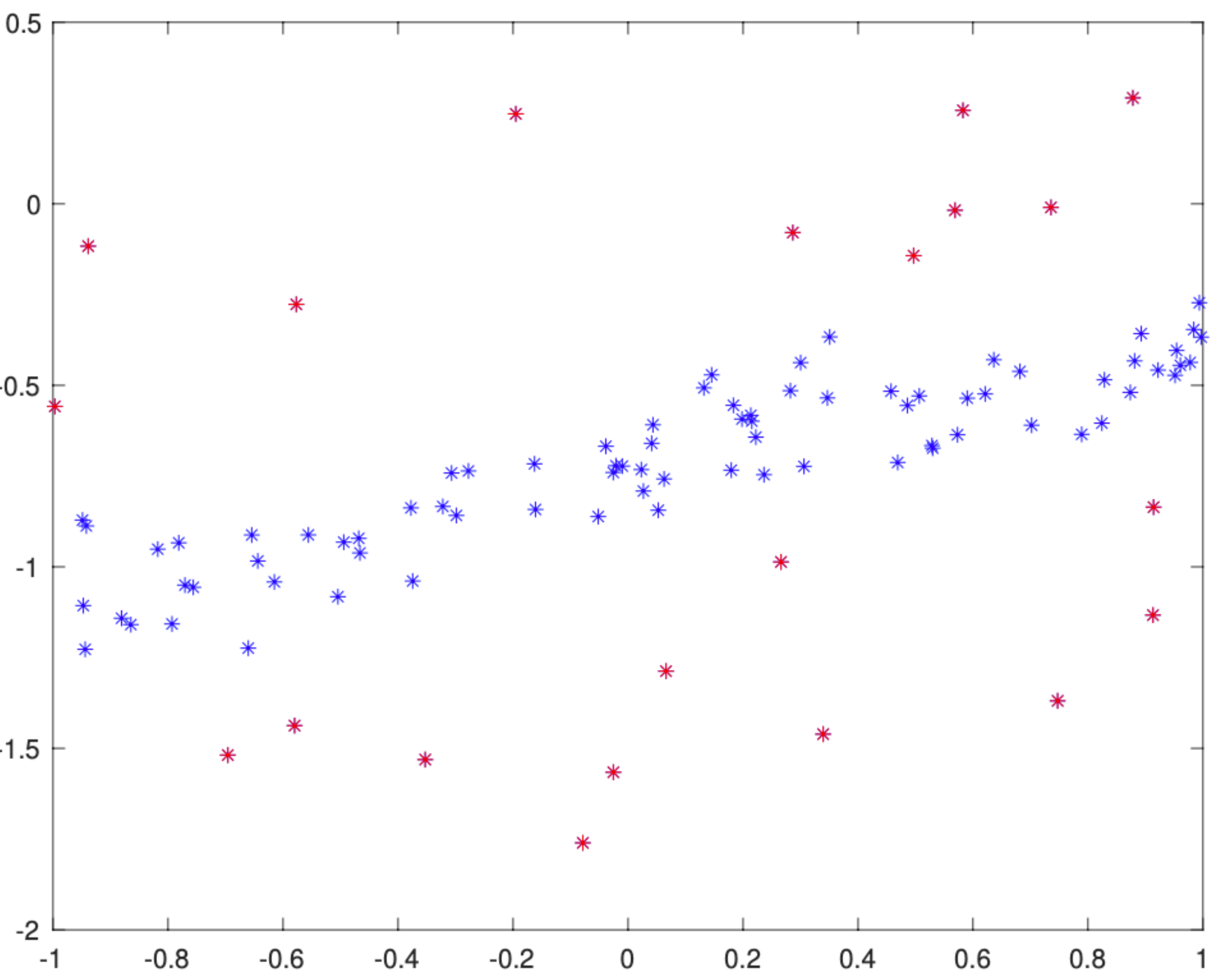}\label{fig:linefitting_data}}
	\subfigure[]{\includegraphics[width=0.49\textwidth]{./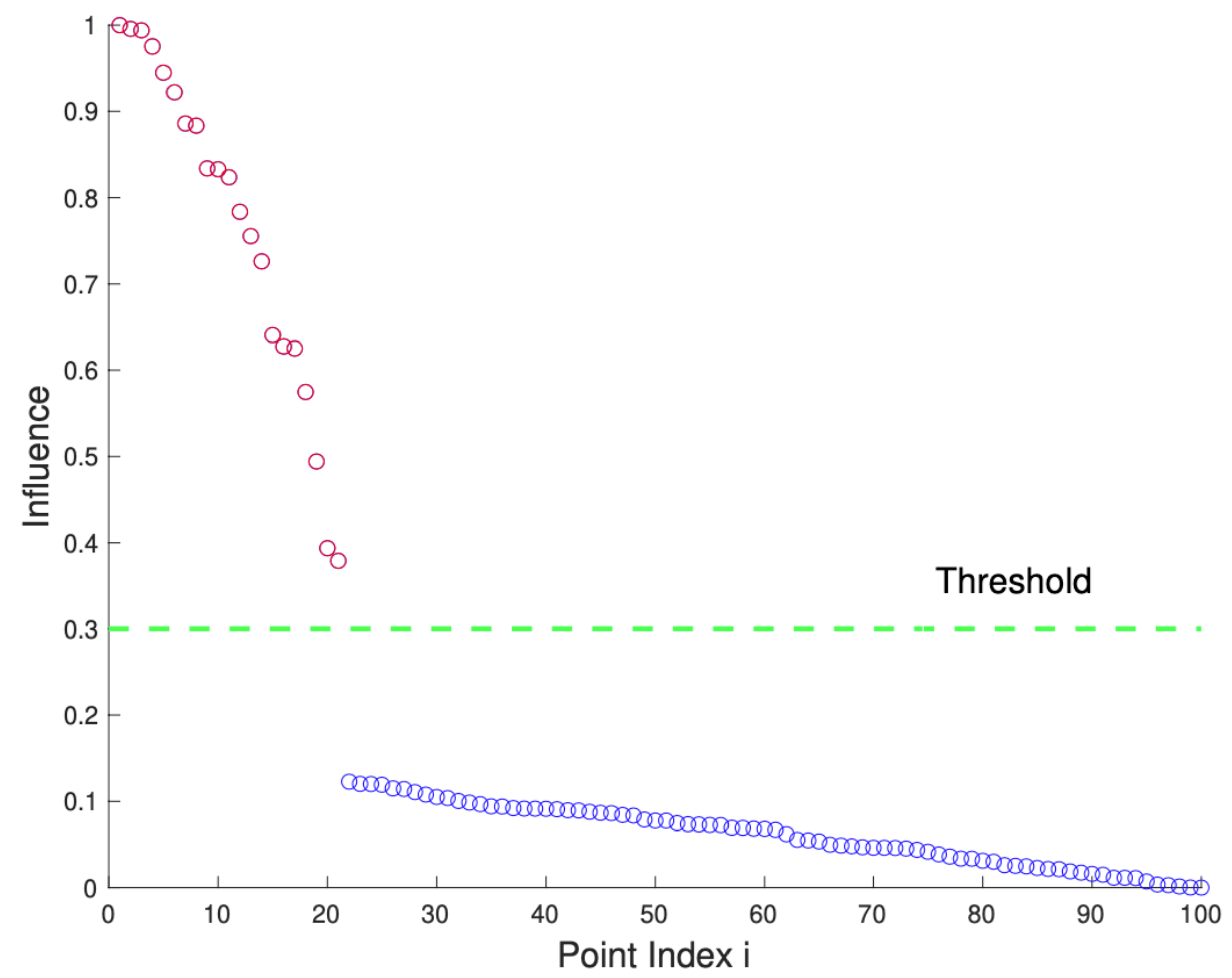}\label{fig:linefitting_influence}}
	\subfigure[Feature correspondences across multiple calibrated views.]{\includegraphics[width=0.45\textwidth]{./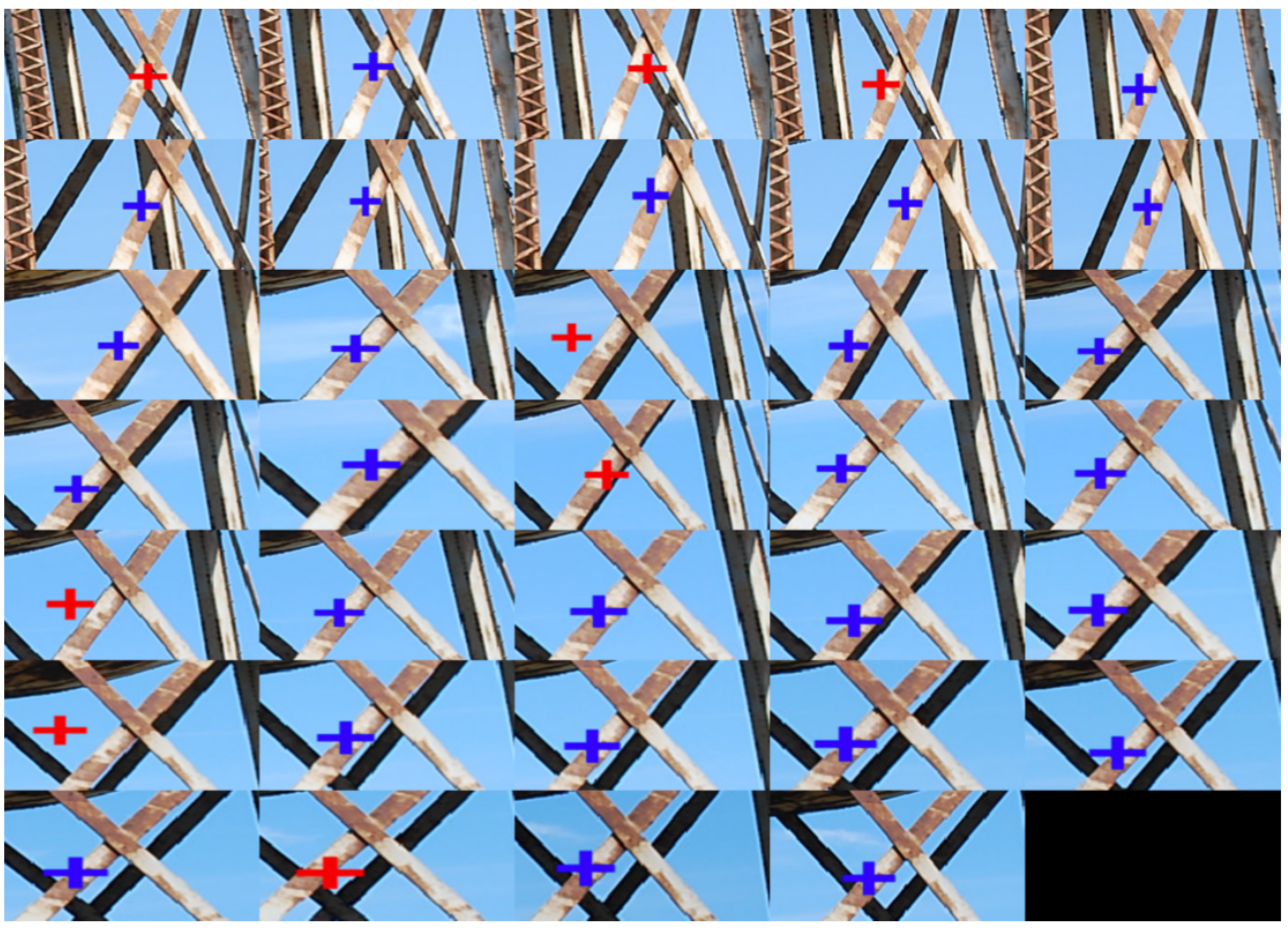}\label{fig:triangulation_data}}
	\subfigure[]{\includegraphics[width=0.49\textwidth]{./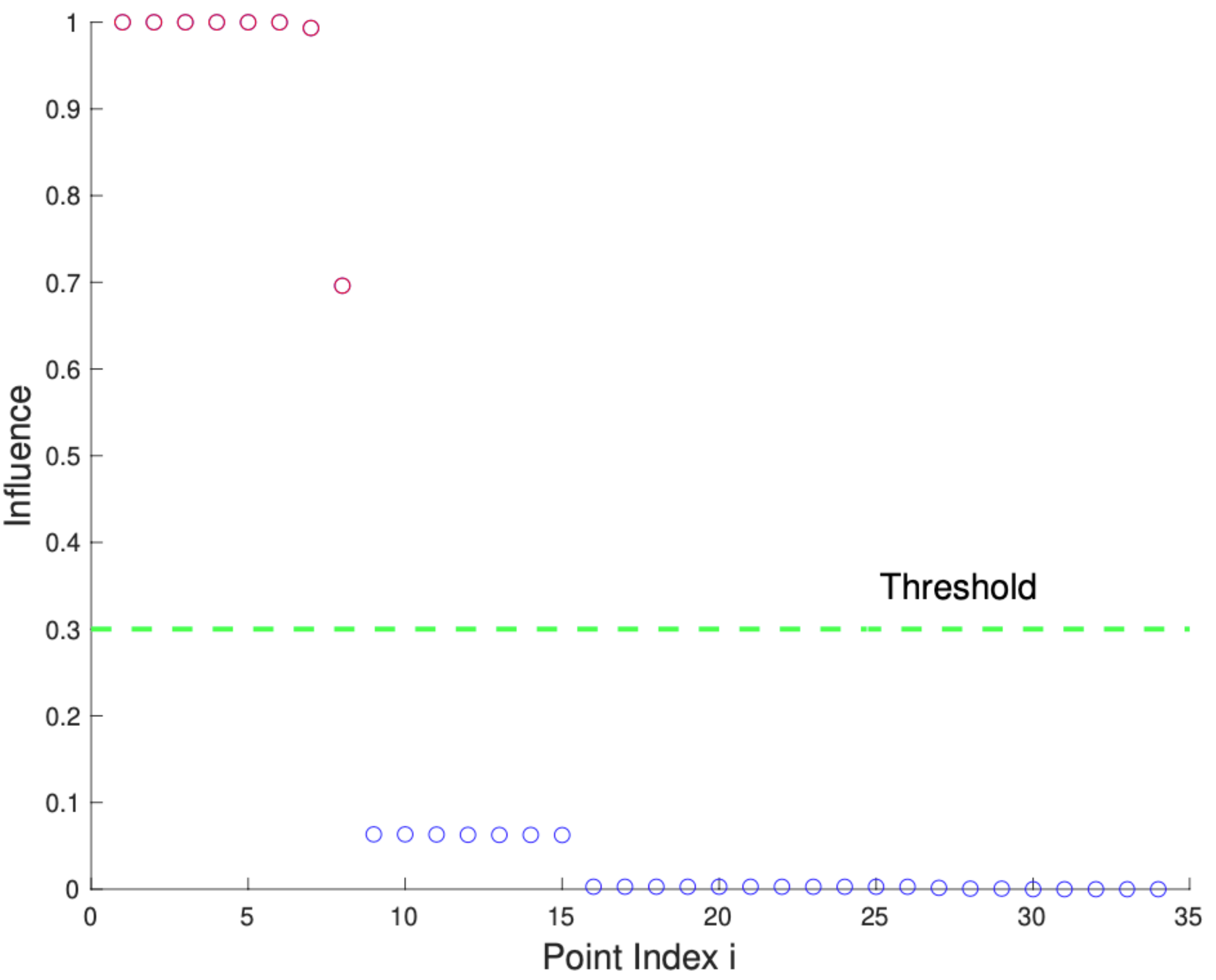}\label{fig:triangulation_influence}}
	\subfigure[Two-view feature correspondences.]{\includegraphics[width=0.45\textwidth]{./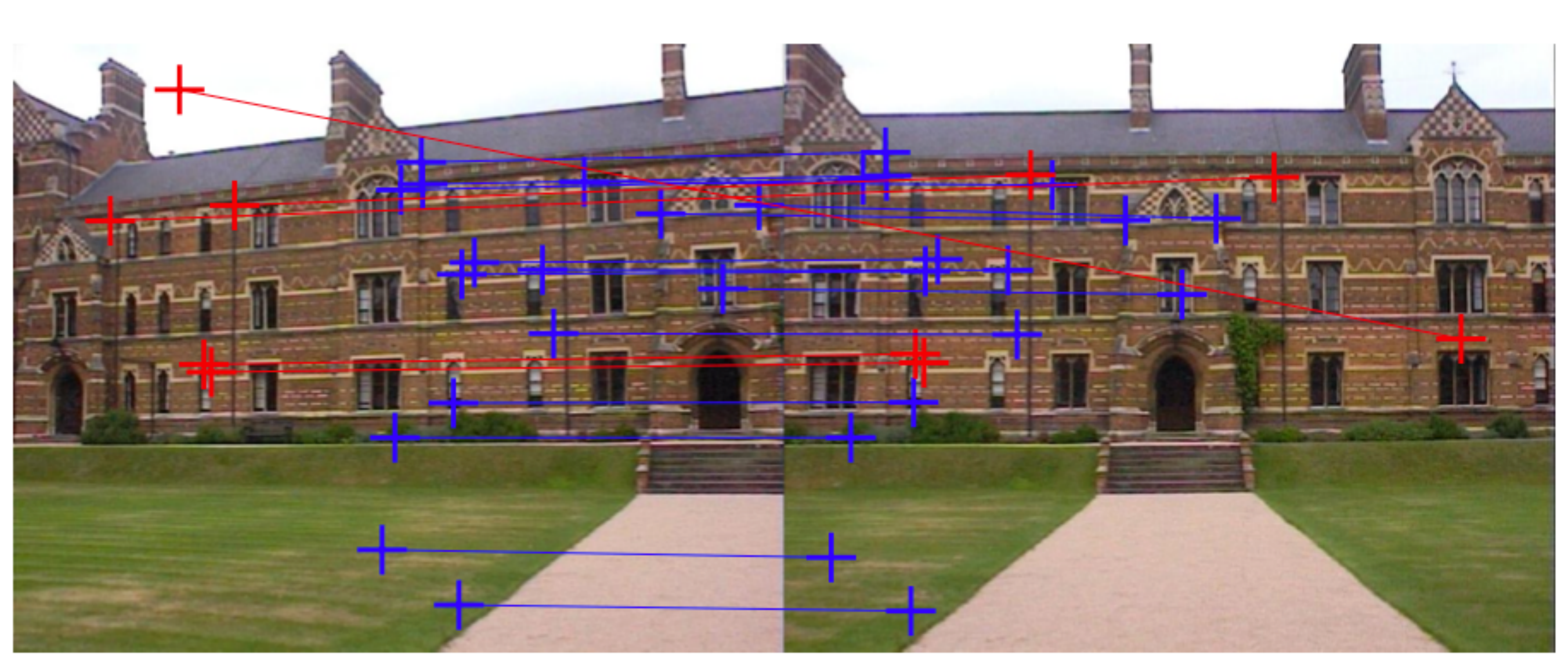}\label{fig:homography_data}}
	\subfigure[]{\includegraphics[width=0.49\textwidth]{./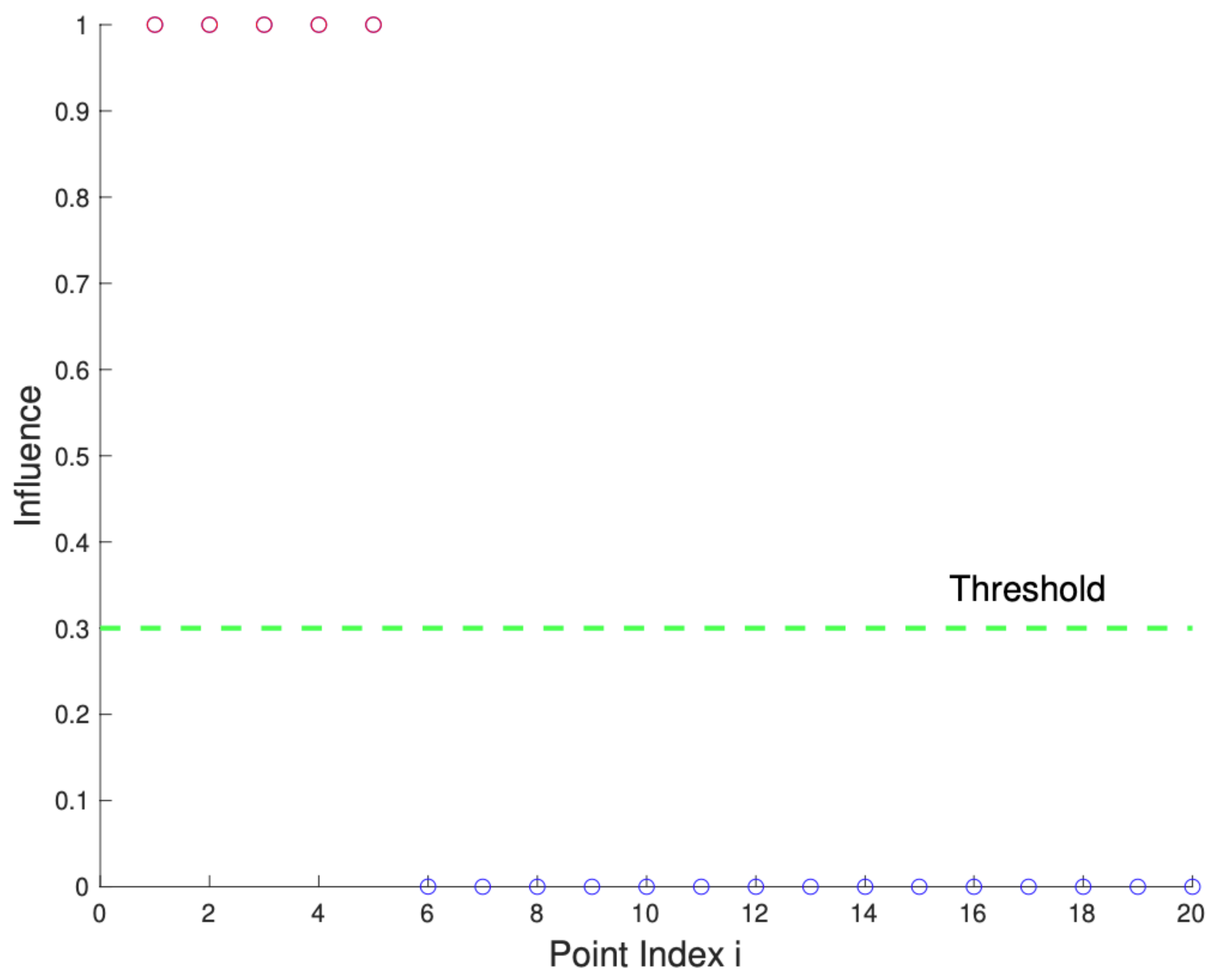}\label{fig:homography_influence}}
	\caption{Data instances $\cD$ with outliers (left column) and their normalised influences (right column). Row 1 shows a line fitting instance with $d = 2$ and $N = 100$; Row 2 shows a triangulation instance with $d = 3$ and $N = 34$; Row 3 shows a homography estimation instance with $d = 8$ and $N = 20$. In each result, the normalised influences were thresholded at 0.3 to separate the inliers (blue) and outliers (red).}
	\label{fig:exactcomputations}
\end{figure}

\paragraph{Multiple view triangulation}

Given observations $\cD$ of a 3D scene point $\cM$ in $N$ calibrated cameras, we wish to estimate the coordinates $\bx \in \mathbb{R}^3$ of $\cM$. The $i$-th camera matrix is $\bP_i \in \mathbb{R}^{3\times 4}$, and each data point $\bp_i$ has the form
\begin{align}
\bp_i = \left[ u_i, v_i \right]^T.
\end{align}
The residual function is the reprojection error
\begin{align}
r_i(\bx) = \left\| \bp_i - \frac{\bP_i^{1:2}\tilde{\bx}}{\bP^3_i\tilde{\bx}} \right\|_2,
\end{align}
where $\tilde{\bx} = [ \bx^T, 1 ]^T$, and $\bP^{1:2}_{i}$ and $\bP^{3}_{i}$ are respectively the first-two rows and third row of $\bP_i$. The reprojection error is quasiconvex in the region $\bP^3_i\tilde{\bx} > 0$~\cite{kahl08}, and the associated minimax problem~\eqref{eq:minimax} can be solved using generalised linear programming~\cite{eppstein05} or specialised routines such as bisection with SOCP feasibility tests; see~\cite{kahl08} for details.

Fig.~\ref{fig:triangulation_data} shows a triangulation instance $\cD$ with $N = 34$ image observations of the same scene point, while Fig.~\ref{fig:triangulation_influence} plots the sorted normalised influences of the data. Again, a clear dichotomy between inliers and outliers can be seen.

\paragraph{Homography estimation}

Given a set of feature matches $\cD$ across two images, we wish to estimate the homography $\cM$ that aligns the feature matches. The homography is parametrised by a homogeneous $3 \times 3$ matrix $\bH$ which is ``dehomogenised" by fixing one element (specifically, the bottom right element) to a constant of $1$, following~\cite{kahl08}. The remaining elements thus form the parameter vector $\bx \in \mathbb{R}^8$. Each data point $\bp_i$ contains matching image coordinates
\begin{align}
\bp_i =  ( \bu_i, \bv_i ).
\end{align}
The residual function is the transfer error
\begin{align}
r_i(\bx) = \frac{\| (\bH^{1:2} - \bv_i\bH_3)\tilde{\bu}_i \|_2}{\bH^3\tilde{\bu}_i},
\end{align}
where $\tilde{\bu}_i = [ \bu_i^T, 1 ]^T$, and $\bH^{1:2}_{i}$ and $\bH^{3}_{i}$ are respectively the first-two rows and third row of $\bH_i$. The transfer error is quasiconvex in the region $\bH^3\tilde{\bu}_i > 0$~\cite{kahl08}, which usually fits the case in real data; see~\cite{kahl08} for more details. As in the case of triangulation, the associated minimax problem~\eqref{eq:minimax} for the transfer error can be solved efficiently.

Fig.~\ref{fig:homography_data} shows a homography estimation instance $\cD$ with $N = 20$ feature correspondences, while Fig.~\ref{fig:homography_influence} plots the sorted normalised influences of the data. Again, a clear dichotomy between inliers and outliers can be observed.

\subsection{Robust fitting based on influence}\label{sec:robustfitting}

As noted in~\cite{suter2020monotone} and depicted above, the influence has a ``natural ability" to separate inliers and outliers; specifically, outliers tend to have higher influences than inliers. A basic robust fitting algorithm can be designed as follows:
\begin{enumerate}%[topsep=1pt,parsep=0em,itemsep=1pt]
	\item Compute influence $\{ \alpha_i \}^{N}_{i=1}$ for $\cD$ with a given $\epsilon$.
	\item Fit $\cM$ (e.g., using least squares) onto the subset of $\cD$ whose $\alpha_i \le \gamma$, where $\gamma$ is a predetermined threshold.
\end{enumerate}
See Fig.~\ref{fig:exactcomputations} and Fig.~\ref{fig:bighomography} for results of this simple algorithm. A more sophisticated usage of the influences could be to devise inlier probabilities based on influences and supply them to algorithms such as PROSAC~\cite{chum05} or USAC~\cite{raguram13}.

In the rest of this paper (Sec.~\ref{sec:classical} onwards), we will mainly be concerned with computing the influences $\{ \alpha_i \}^{N}_{i=1}$  as this is a major bottleneck in the robust fitting algorithm above. Note also that the algorithm assumes only a single structure in the data---for details on the behaviour of the influences under multiple structure data, see~\cite{suter2020monotone}.

%===========================================================
\section{Classical algorithm}\label{sec:classical}

The naive method to compute influence $\alpha_i$ by enumerating $\bz$ is infeasible (although the enumeration technique was done in Fig.~\ref{fig:exactcomputations}, the instances there are low-dimensional $d$ or small in size $N$). A more practical solution is to sample a subset $\sfZ \subset \{0,1\}^N$ and approximate the influence as
\begin{align}\label{eq:approx1}
\hat{\alpha}_i = \frac{1}{|\sfZ|} \left| \{ \bz \in \sfZ \mid  f(\bz \oplus \be_i ) \ne f(\bz)  \} \right|.
\end{align}

Further simplification can be obtained by appealing to the existence of bases in the minimax problem~\eqref{eq:minimax} with quasiconvex residuals~\cite{eppstein05}. Specifically, $\cB \subseteq \cD$ is a basis if
\begin{align}
g(\cA) < g(\cB) \hspace{1em} \forall \cA \subset \cB.
\end{align}
Also, each $\cC \subseteq \cD$ contains a basis $\cB \subseteq \cC$ such that
\begin{align}
g(\cC) = g(\cB),
\end{align}
and, more importantly,
\begin{align}
g(\cC \cup \{i\}) = g(\cB \cup \{i\}).
\end{align}
Amenta et al.~\cite{amenta99} proved that the size of a basis (called the combinatorial dimension $k$) is at most $2d+1$. For the examples in Sec.~\ref{sec:examples} with residuals that are continuously shrinking, $k = d+1$. Small bases (usually $k \ll N$) enable quasiconvex problems to be solved efficiently~\cite{amenta99,matousek96,eppstein05}. In fact, some of the algorithms compute $g(\cC)$ by finding its basis $\cB$.

It is thus sufficient to sample $\sfZ$ from the set of all $k$-subsets of $\cD$. Algorithm~\ref{alg:classical} summarises the influence computation method for the quasiconvex case.

\begin{algorithm}
	\begin{algorithmic}[1]
		\REQUIRE $N$ input data points $\cD$, combinatorial dimension $k$, inlier threshold $\epsilon$, number of iterations $M$.
		%\STATE $\hat{\alpha}^{[0]}_i \leftarrow 0$ for $i=1,\dots,N$.
		\FOR{$m = 1,\dots,M$}
		\STATE $\bz^{[m]} \leftarrow$ Randomly choose $k$-tuple from $\cD$.
		\FOR{$i = 1,\dots,N$}
		\IF{$f(\bz^{[m]} \oplus \be_i) \ne f(\bz^{[m]})$}
		\STATE $X^{[m]}_i \leftarrow 1$.\label{step:xi1}
		\ELSE
		\STATE $X^{[m]}_i \leftarrow 0$.\label{step:xi0}
		\ENDIF
		\ENDFOR
		\ENDFOR
		\FOR{$i = 1,\dots,N$}
		\STATE $\hat{\alpha}_i \leftarrow \frac{1}{M}\sum_{m=1}^M X^{[m]}_i$.
		\ENDFOR
		\RETURN $\{ \hat{\alpha}_i \}^{N}_{i=1}$.
	\end{algorithmic}
	\caption{Classical algorithm to compute influence.}\label{alg:classical}
\end{algorithm}

\subsection{Analysis}\label{sec:analysis1}

For $N$ data points and a total of $M$ samples, the algorithm requires $\cO(NM)$ calls of $f$, i.e., $\cO(NM)$ instances of minimax~\eqref{eq:minimax} of size proportional to only $k$ and $d$ each.

How does the estimate $\hat{\alpha}_i$ deviate from the true influence $\alpha_i$? To answer this question, note that since $\sfZ$ are random samples of $\{ 0,1\}^N$, the $X^{[1]}_i, \dots, X^{[M]}_i$ calculated in Algorithm~\ref{alg:classical} (Steps~\ref{step:xi1} and~\ref{step:xi0}) are effectively i.i.d.~samples of
\begin{align}
X_i \sim \textrm{Bernoulli}(\alpha_i);
\end{align}
cf.~\eqref{eq:influence} and~\eqref{eq:approx1}. Further, the estimate $\hat{\alpha}_i$ is the empirical mean of the $M$ samples
\begin{align}
\hat{\alpha}_i = \frac{1}{M}\sum_{m=1}^M X^{[m]}_i.
\end{align}
By Hoeffding's inequality~\cite{wiki:hoeffding}, we have
\begin{align}\label{eq:converge1}
Pr(|\hat{\alpha}_i-\alpha_i| < \delta) > 1 -  2e^{-2M\delta^2},
\end{align}
where $\delta$ is a desired maximum deviation. In words,~\eqref{eq:converge1} states that as the number of samples $M$ increases, $\hat{\alpha}_i$ converges probabilistically to the true influence $\alpha_i$.

\subsection{Results}\label{sec:results}

\begin{figure}[t]\centering
	\subfigure[Result for the line fitting instance in~\ref{fig:linefitting_data}.]{\includegraphics[width=0.49\textwidth]{./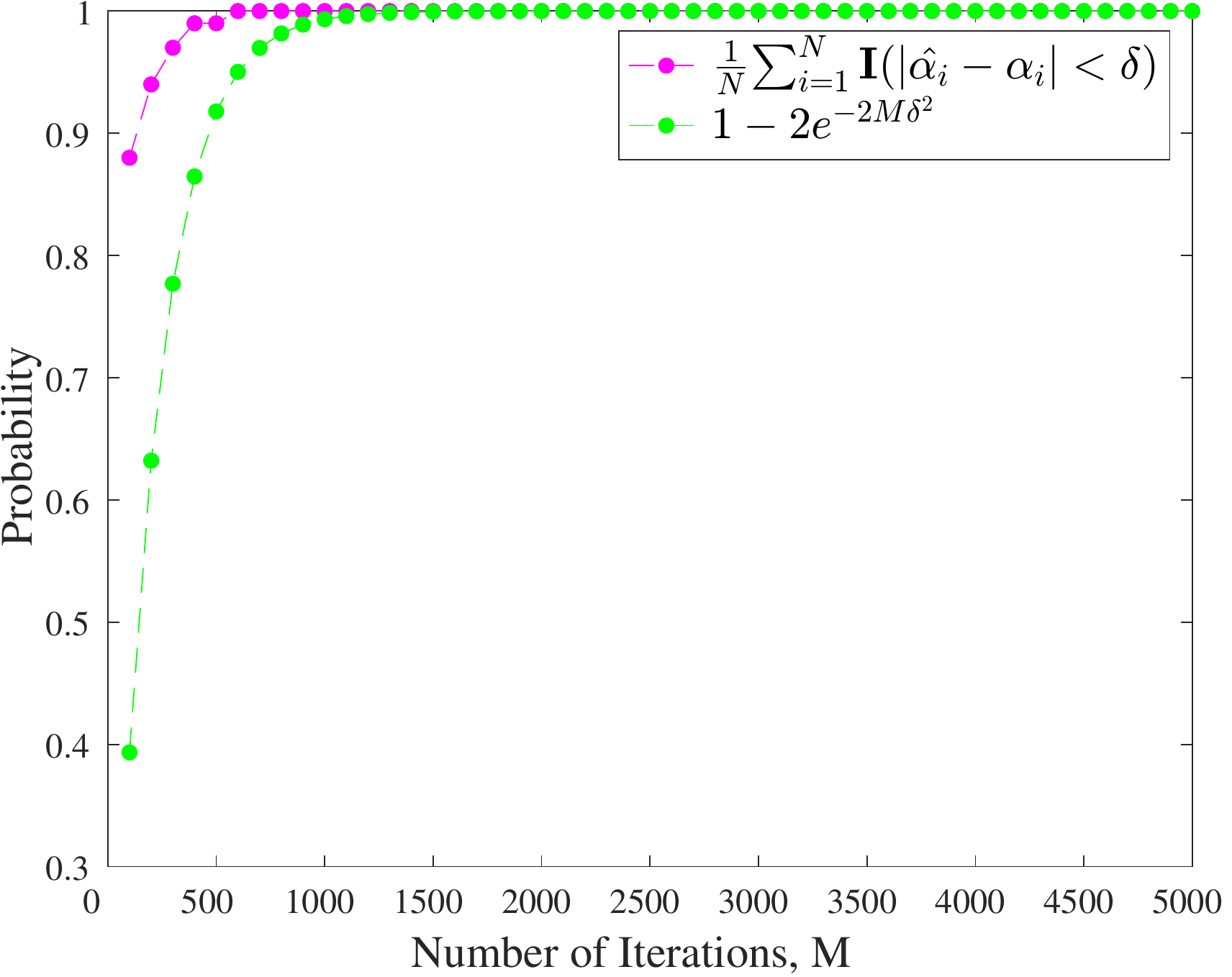}\label{fig:linefitting_err}}
	\hfill
	\subfigure[Result for the triangulation instance in~\ref{fig:triangulation_data}.]{\includegraphics[width=0.49\textwidth]{./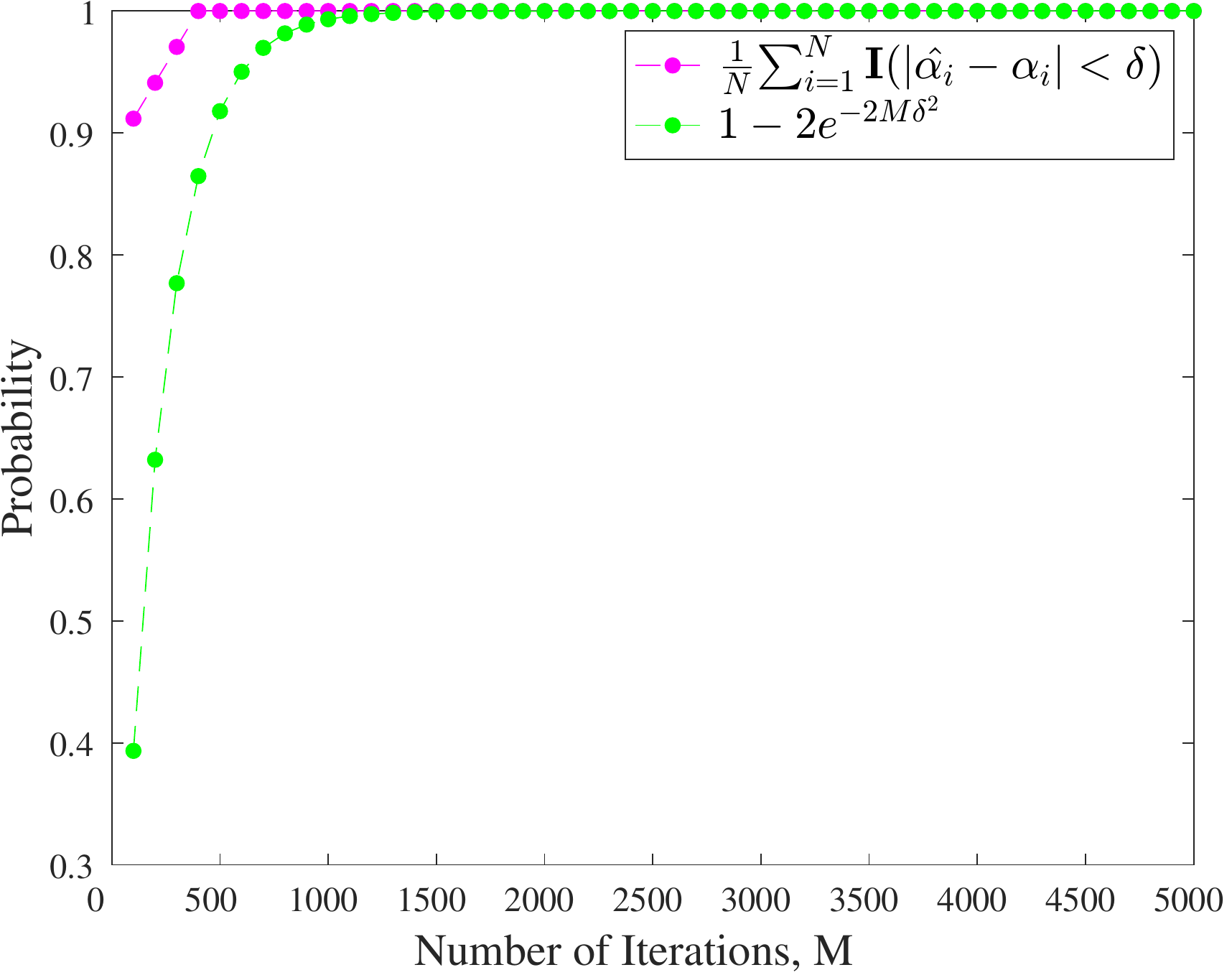}\label{fig:triangulation_err}}
	\subfigure[Result for the homography estimation instance in~\ref{fig:homography_data}.]{\includegraphics[width=0.49\textwidth]{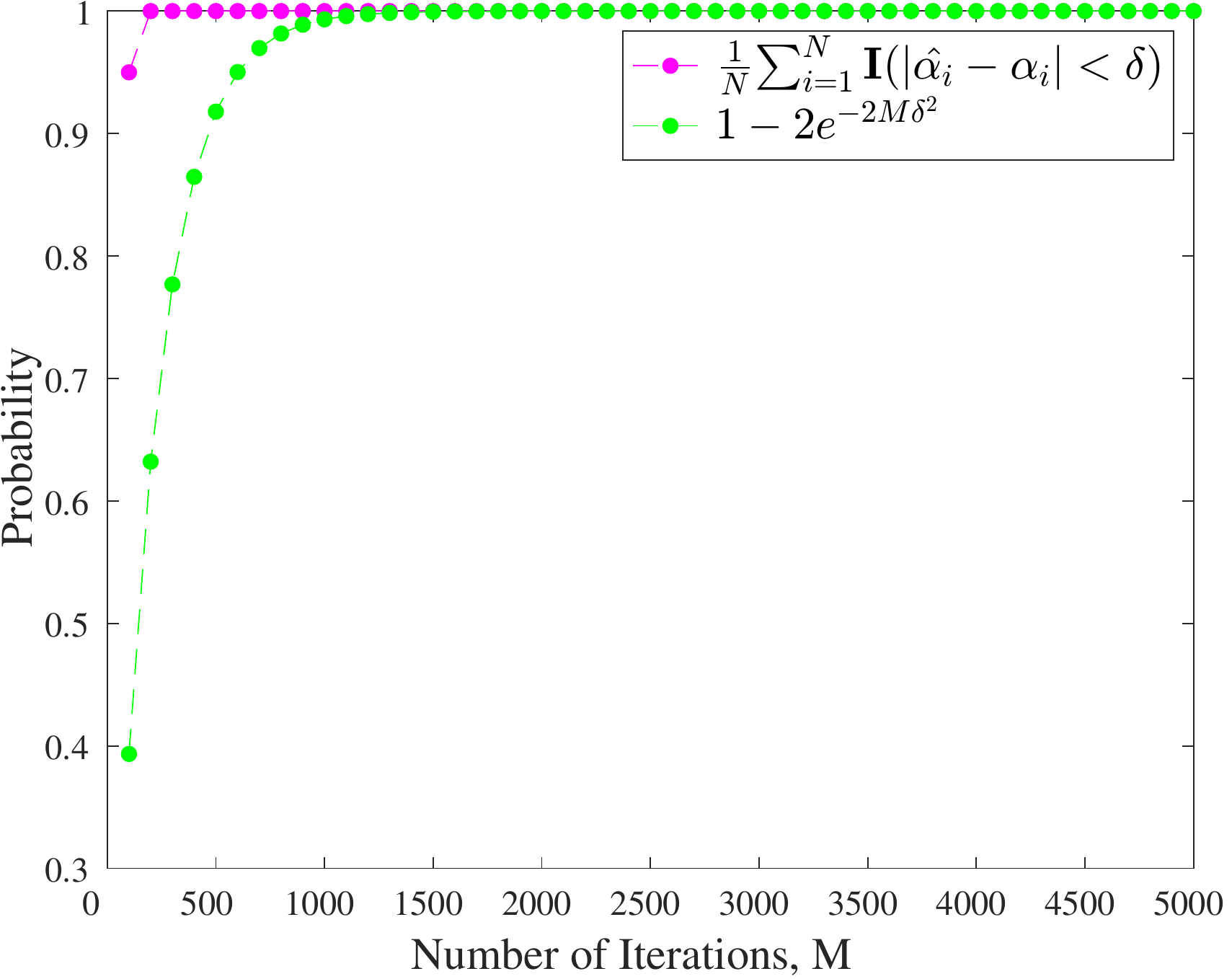}\label{fig:homography_err}}
	%\subfigure[]{\includegraphics[width=0.47\textwidth]{./}\label{fig:homography_influence}}
	\caption{Comparing approximate influences from Algorithm~\ref{alg:classical} with the true influences~\eqref{eq:influence}, for the problem instances in Fig.~\ref{fig:exactcomputations}. The error of the approximation (magenta) is within the probabilistic bound (green). See Sec.~\ref{sec:results} on the error metric used.}
	\label{fig:approxcomputations}
\end{figure}

Fig.~\ref{fig:approxcomputations} illustrates the results of Algorithm~\ref{alg:classical} on the data in Fig.~\ref{fig:exactcomputations}. Specifically, for each input instance, we plot in Fig.~\ref{fig:approxcomputations} the proportion of $\{ \hat{\alpha}_i \}^{N}_{i=1}$ that are within distance $\delta = 0.05$ to the true $\{ \alpha_i \}^{N}_{i=1}$, i.e., 
\begin{align}
\frac{1}{N}\sum^{N}_{i=1} \mathbb{I}(| \hat{\alpha}_i - \alpha_i | < 0.05 ),
\end{align}
as a function of number of iterations $M$ in Algorithm~\ref{alg:classical}. The probabilistic lower bound $1 -  2e^{-2M\delta^2}$ is also plotted as a function of $M$. The convergence of the approximate influences is clearly as predicted by~\eqref{eq:converge1}.

Fig.~\ref{fig:bighomography} shows the approximate influences computed using Algorithm~\ref{alg:classical} on 3 larger input instances for homography estimation. Despite using a small number of iterations ($M \approx 800$), the inliers and outliers can be dichotomised well using the influences.

\begin{figure}
	\centering
	\subfigure[]{\includegraphics[width=0.45\textwidth]{./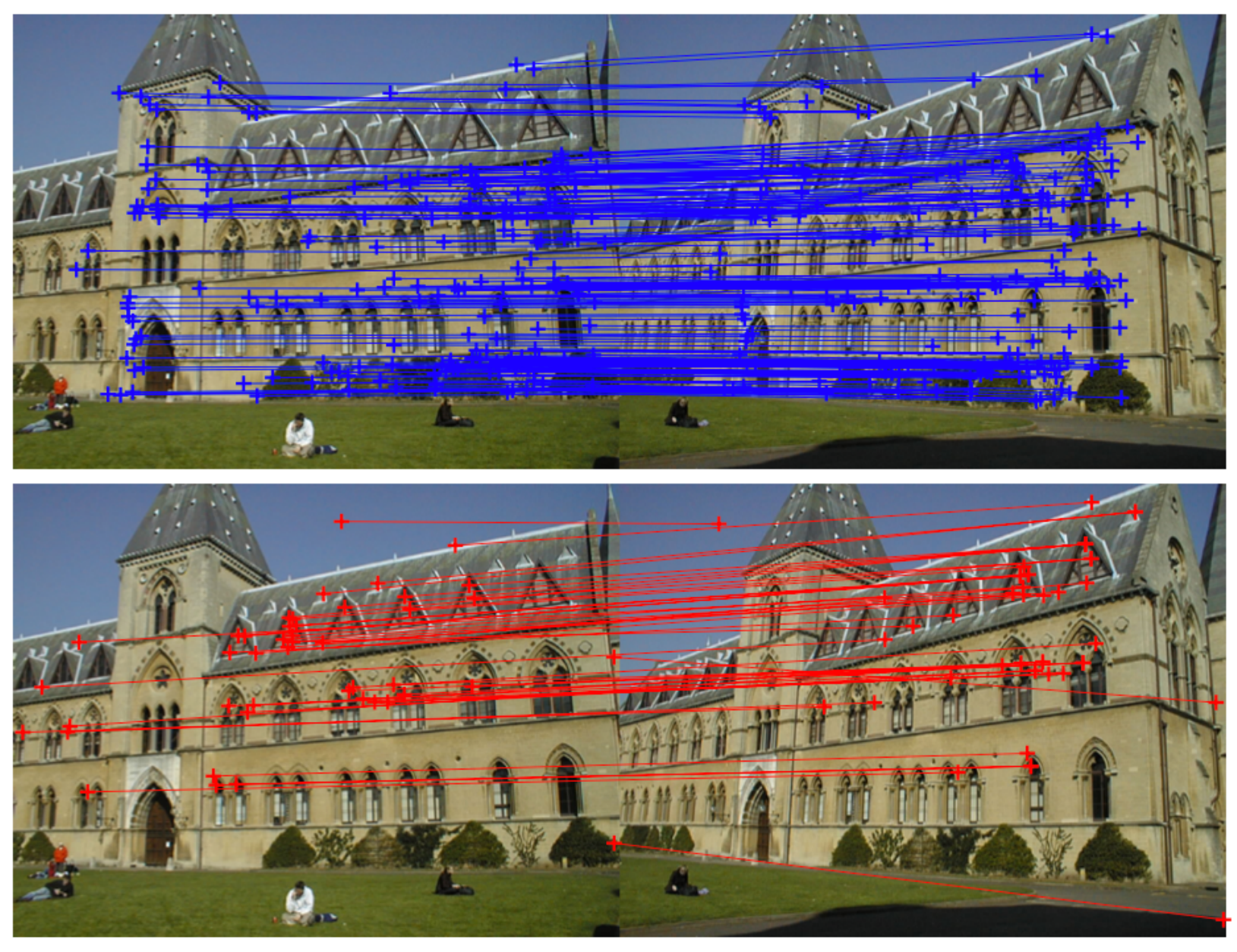}\label{fig:large_homography_data_measurement1}}		
	\subfigure[]{\includegraphics[width=0.50\textwidth]{./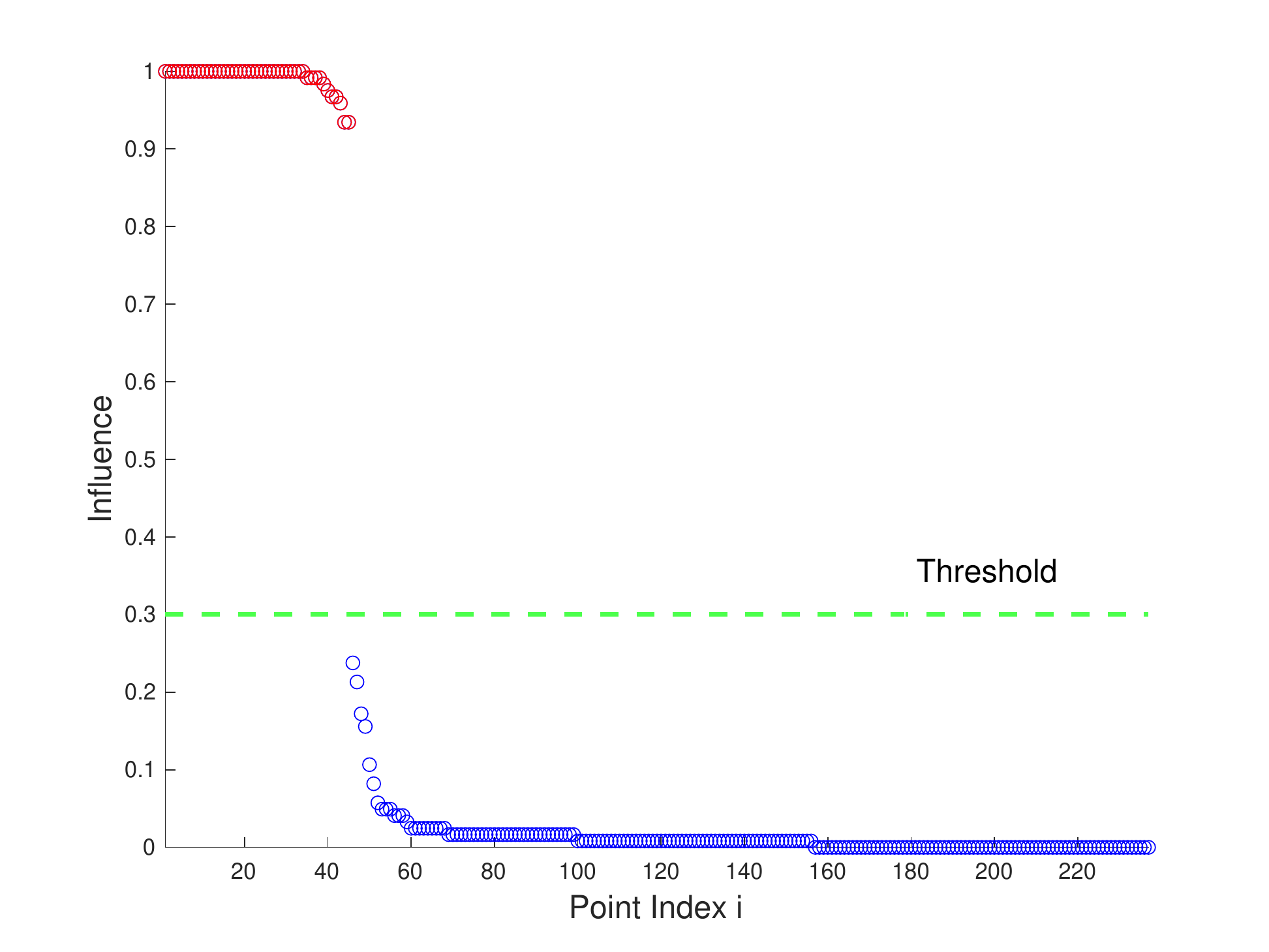}\label{fig:large_homography_data_influence_function1}}		
	
	\subfigure[]{\includegraphics[width=0.45\textwidth]{./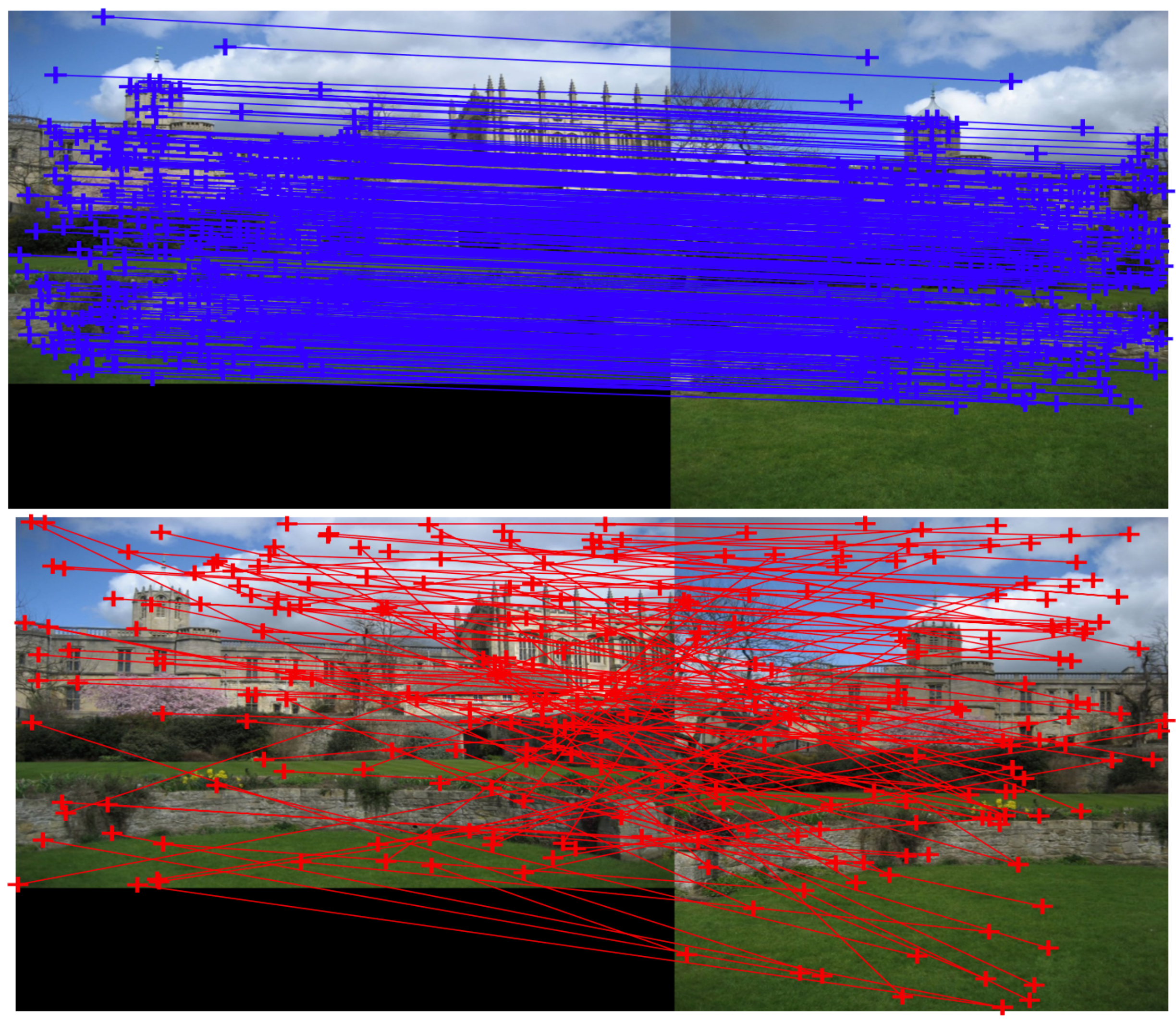}\label{fig:large_homography_data_measurement2}}	
	\subfigure[]{\includegraphics[width=0.50\textwidth]{./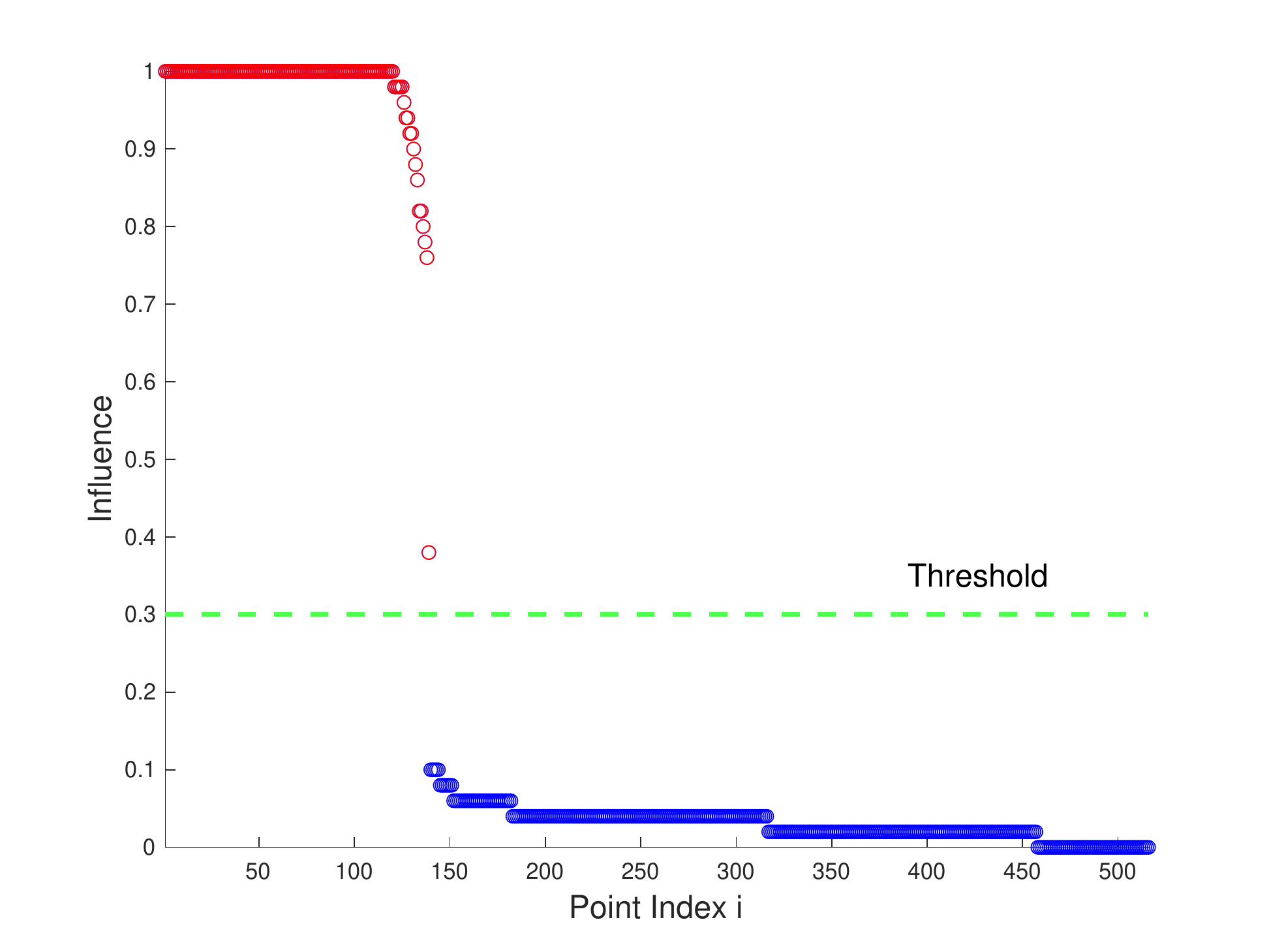}\label{fig:large_homography_data_influence_function2}}	
	
	\subfigure[]{\includegraphics[width=0.45\textwidth]{./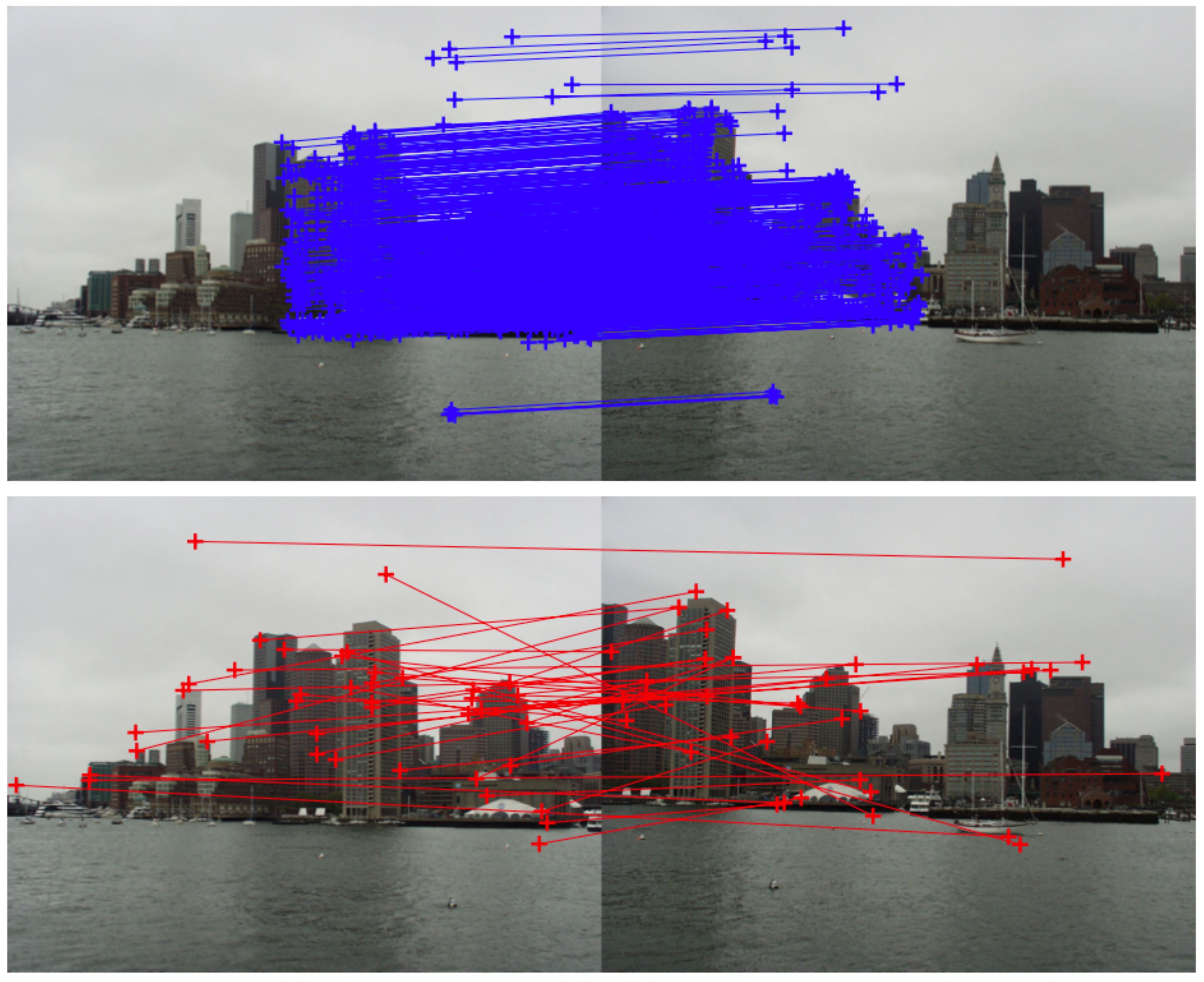}\label{fig:large_homography_data_measurement3}}	
	\subfigure[]{\includegraphics[width=0.50\textwidth]{./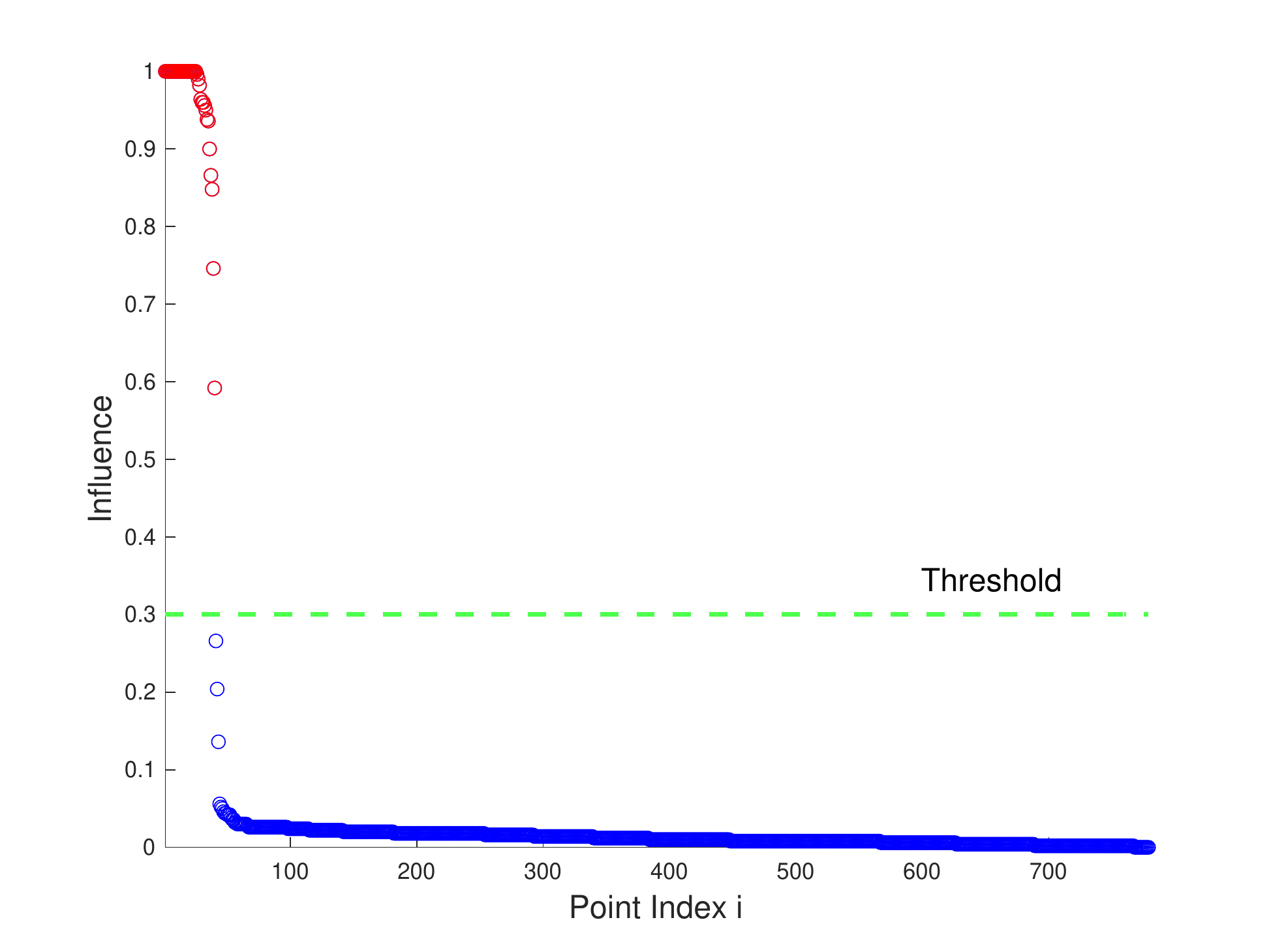}\label{fig:large_homography_data_influence_function3}}	
	\caption{Large homography estimation instances, separated into inliers (blue) and outliers (red) according to their normalised approximate influences (right column), which were computed using Algorithm~\ref{alg:classical}. Note that only about $M = 800$ iterations were used in Algorithm~\ref{alg:classical} to achieve these results. Row 1 shows an instance with $N=237$ correspondences; Row 2 shows an instance with $N=516$ correspondences; Row 3 shows an instance with $N=995$ correspondences.}
	\label{fig:bighomography}
\end{figure}

The runtimes of Algorithm 1 for the input data above are as follows:
\begin{center}
\begin{tabular}{|c|c|c|c|c|c|c|}
\hline
Input data (figure) & \ref{fig:linefitting_data} & \ref{fig:triangulation_data} & \ref{fig:homography_data} & \ref{fig:large_homography_data_measurement1} & \ref{fig:large_homography_data_measurement2} & \ref{fig:large_homography_data_measurement3} \\ 
\hline
Iterations ($M$) & 5,000 & 5,000 & 5,000 & 800 & 800 & 800 \\
\hline
Runtime (s) & 609 & 4,921 & 8,085 & 2,199 & 5,518 & 14,080 \\
\hline
\end{tabular}
\end{center}
The experiments were conducted in MATLAB on a standard desktop using unoptimised code, e.g., using \texttt{fmincon} to evaluate $f$ instead of more specialised routines.

\section{Quantum algorithm}\label{sec:quantum}

We describe a quantum version of Algorithm~\ref{alg:classical} for influence computation and investigate the speed-up provided.

\subsection{Quantum circuit}

We use the Bernstein-Vazirani (BV) circuit~\cite{bernstein97} originally designed to solve linear Boolean functions; see Fig.~\ref{fig:bv}.

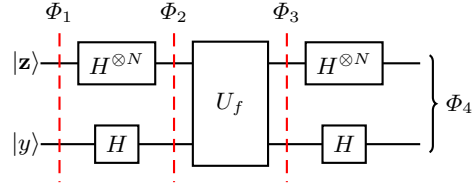
\begin{figure}[h]\centering
	\begin{quantikz}
		\ket{\bf{z}}\slice{$\Phi_1$} & \gate[wires=1][1cm]{H^{\otimes N}}\slice{$\Phi_2$} & \gate[wires=2][1cm]{U_f}\slice{$\Phi_3$} & \gate[wires=1][1cm]{H^{\otimes N}} & \qw \rstick[wires=2]{$\Phi_4$} \\
		\ket{y}                    &   \gate{H}                                                            &                                         &             \gate{H}                                                    & \qw 
	\end{quantikz}
	%\vspace{1em}
	\caption{Quantum circuit for influence computation.}\label{fig:bv}
\end{figure}

For our application, our circuit builds an $(N+1)$-qubit system, where $N$ is the number of points $\cD$. At the input stage, the system contains quantum registers
\begin{align}
\ket{\bz} \otimes \ket{y} = \ket{\bz}\ket{y},
\end{align}
where, as before, $\bz$ is a binary vector indicating selection of points in $\cD$, and $y$ is a dummy input.

The Boolean function $f$~\eqref{eq:feasibility} and the underlying data $\cD$ are implemented in the quantum oracle $U_f$, where
\begin{align}
U_f\ket{\bz}\ket{y} = \ket{\bz}\ket{y\otimes f(\bz)}
\end{align}
and $\oplus$ is bit-wise XOR. Recall that by considering only quasiconvex residual functions (Sec.~\ref{sec:prelim}), $f$ is classically solvable in polynomial time, thus its quantum equivalent $U_f$ will also have an efficient implementation (requiring polynomial number of quantum gates)~\cite[Sec.~3.25]{nielsen10}. Following the analysis of the well-known quantum algorithms (e.g., Grover's search, Shor's factorisation algorithm), we will mainly be interested in the number of times we need to invoke $U_f$ (i.e., the query complexity of the algorithm~\cite{ambainis18}) and not the implementation details of $U_f$ (Sec.~\ref{sec:qanal}).

\subsection{Quantum operations}

Our usage of BV follows that of~\cite{floess13,li15} (for basics of quantum operations, see~\cite[Chapter 5]{rieffel14}). We initialise with $\ket{\bz} = \ket{\mathbf{0}}$ and $\ket{y} = \ket{1}$ thus
\begin{align}
\Phi_1 = \ket{\bzero}\ket{1}.
\end{align}
The next operation consists of $N+1$ Hadamard gates $H^{\otimes (N+1)}$; the behaviour of $n$ Hadamard gates is as follows
\begin{align}
H^{\otimes n}\ket{\bq} = \frac{1}{\sqrt{2^n}} \sum_{\bt \in \{0,1 \}^n} (-1)^{\bq \cdot \bt}\ket{\bt},
\end{align}
hence
\begin{align}
\Phi_2 &= H^{\otimes(N+1)}\Phi_1\\
&= \frac{1}{\sqrt{2^N}}\sum_{\bt \in \{0,1\}^N} \ket{\bt}\frac{\ket{0}-\ket{1}}{\sqrt{2}}.
\end{align}
Applying $U_f$, we have
\begin{align}
\Phi_3 &= U_f \Phi_2\\
&= \frac{1}{\sqrt{2^N}} \sum_{\bt \in \{0,1 \}^N} (-1)^{f(\bt)}\ket{\bt} \frac{\ket{0}-\ket{1}}{\sqrt{2}}.
\end{align}
Applying the Hadamard gates $H^{\otimes (N+1)}$ again,
\begin{align}
\Phi_4 &= H^{\otimes (N+1)}\Phi_3\\
&= \frac{1}{2^N} \sum_{\bs \in \{0,1\}^N} \sum_{\bt \in \{0,1 \}^N} (-1)^{f(\bt)+\bs\cdot\bt}\ket{\bs}\ket{1}.
\end{align}
Focussing on the top-$N$ qubits in $\Phi_4$, we have
\begin{align}
\sum_{\bs \in \{0,1\}^N} I(\bs) \ket{\bs},
\end{align}
where
\begin{align}
I(\bs) := \sum_{\bt \in \{0,1 \}^N} (-1)^{f(\bt)+\bs\cdot\bt}.
\end{align}
The significance of this result is as follows.

\begin{theorem}
	Let $\bs = [s_1,\dots,s_N] \in \{0,1 \}^N$. Then
	\begin{align}
	\alpha_i = \sum_{s_i = 1} I(\bs)^2.
	\end{align}
\end{theorem}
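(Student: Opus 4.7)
The plan is to recognize $I(\bs)$ as (a constant multiple of) the Walsh--Hadamard coefficient of the $\pm 1$-valued Boolean function $g(\bt) := (-1)^{f(\bt)}$, and then derive the classical Fourier formula for influence, $\mathrm{Inf}_i(g) = \sum_{\bs:\,s_i=1}\hat{g}(\bs)^2$. With $\hat{g}(\bs) := 2^{-N}\sum_\bt g(\bt)(-1)^{\bs\cdot\bt} = 2^{-N}I(\bs)$, this is exactly the claim up to normalization.

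The first step is to convert the probability $\alpha_i$ into a correlation. Because $g$ takes values in $\{-1,+1\}$, the indicator $\mathbb{I}[f(\bz\oplus\be_i)\ne f(\bz)]$ equals $\tfrac{1}{2}(1 - g(\bz)\,g(\bz\oplus\be_i))$, so averaging uniformly over $\bz\in\{0,1\}^N$ gives
\[
\alpha_i \;=\; \frac{1}{2} - \frac{1}{2}\,\mathbb{E}_{\bz}\!\bigl[g(\bz)\,g(\bz\oplus\be_i)\bigr].
\]
Next I would substitute the Walsh expansion $g(\bz) = \sum_\bs \hat{g}(\bs)(-1)^{\bs\cdot\bz}$ into both factors and pull the XOR out via the character identity $(-1)^{\bs\cdot(\bz\oplus\be_i)} = (-1)^{\bs\cdot\bz}(-1)^{s_i}$. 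The resulting double sum over $(\bs,\bs')$ collapses to its diagonal by orthogonality of characters, $\mathbb{E}_{\bz}[(-1)^{(\bs+\bs')\cdot\bz}] = \delta_{\bs=\bs'}$, leaving the clean identity $\mathbb{E}_{\bz}[g(\bz)g(\bz\oplus\be_i)] = \sum_\bs \hat{g}(\bs)^2(-1)^{s_i}$.

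Plugging this back and using Parseval, $\sum_\bs \hat{g}(\bs)^2 = \mathbb{E}_{\bz}[g(\bz)^2] = 1$, the two halves combine as $\alpha_i = \tfrac{1}{2}\sum_\bs \hat{g}(\bs)^2\bigl(1-(-1)^{s_i}\bigr)$, which is $\sum_{\bs:\,s_i=1}\hat{g}(\bs)^2$. Translating back through $\hat{g}(\bs) = 2^{-N} I(\bs)$ produces the stated formula.

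The main subtlety, and really the only place to be careful, is the normalization bookkeeping rather than any genuine difficulty in the Fourier argument. Taken literally, the $I(\bs)$ of the text makes $\sum_{s_i=1} I(\bs)^2$ a factor of $4^N$ too large; the statement is consistent precisely when $I(\bs)$ is identified with the amplitude $2^{-N}\sum_\bt (-1)^{f(\bt)+\bs\cdot\bt}$ of $\ket{\bs}$ in the top-$N$ register of $\Phi_4$, matching the informal passage from $\Phi_4$ to ``$\sum_\bs I(\bs)\ket{\bs}$'' just above the theorem. Under this reading, $I(\bs)^2 = \hat{g}(\bs)^2$ is exactly the probability of measuring outcome $\bs$, and the theorem says that summing these probabilities over measurement outcomes whose $i$-th bit is $1$ recovers $\alpha_i$ — a form that is also ideally suited to the quantum algorithm that will follow.
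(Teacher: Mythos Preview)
Your argument is correct and complete. The paper itself does not supply a proof: it simply defers to \cite{li15}, so you have in fact provided more than the paper does. The Fourier-analytic route you take---rewrite the disagreement probability as $\tfrac12(1-\mathbb{E}[g(\bz)g(\bz\oplus\be_i)])$, expand $g=(-1)^f$ in the Walsh basis, collapse via character orthogonality, and finish with Parseval---is exactly the standard derivation of the influence formula $\mathrm{Inf}_i(g)=\sum_{s_i=1}\hat g(\bs)^2$ in the analysis of Boolean functions, and is almost certainly what the cited reference contains.

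Your observation about normalization is also on point: as literally written in the paper, $I(\bs)=\sum_\bt(-1)^{f(\bt)+\bs\cdot\bt}$ lacks the $2^{-N}$ factor that appears in $\Phi_4$, so the theorem as stated is off by $4^N$. Your reading---that $I(\bs)$ is intended to be the amplitude of $\ket{\bs}$ in the top register, i.e., $2^{-N}\sum_\bt(-1)^{f(\bt)+\bs\cdot\bt}$---is the only one that makes the subsequent line $Pr(s_i=1)=\sum_{s_i=1}I(\bs)^2=\alpha_i$ consistent with Born's rule, and it is clearly what the authors intend.
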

\begin{proof}
	See~\cite[Sec.~3]{li15}.
\end{proof}

The theorem shows that the influences $\{ \alpha_i \}^{N}_{i=1}$ are ``direct outputs" of the quantum algorithm. However, physical laws permit us to access the information indirectly via quantum measurements only~\cite[Chapter 4]{rieffel14}. Namely, if we measure in the standard basis, we get a realisation $\bs$ with probability $I(\bs)^2$. The probability of getting $s_i = 1$ is 
\begin{align}
Pr(s_i = 1) = \sum_{s_i = 1} I(\bs)^2 = \alpha_i.
\end{align}

Note that the above steps involve only one ``call" to $U_f$. However, as soon as $\Phi_4$ is measured, the quantum state collapses and the encoded probabilities vanish.

\subsection{The algorithm}

Based on the setup above, running the BV algorithm \emph{once} provides a \emph{single} observation of \emph{all} $\{ \alpha_i \}^{N}_{i=1}$. This provides a basis for a quantum version of the classical Algorithm~\ref{alg:classical}; see Algorithm~\ref{alg:quantum}. The algorithm runs the BV algorithm $M$ times, each time terminating with a measurement of $\bs$, to produce $M$ realisations $\bs^{[1]}, \dots, \bs^{[M]}$. Approximate estimates $\{ \hat{\alpha}_i \}^{N}_{i=1}$ of the influences are then obtained by collating the results of the quantum measurements.

\begin{algorithm}
	\begin{algorithmic}[1]
		\REQUIRE $N$ input data points $\cD$, inlier threshold $\epsilon$, number of iterations $M$.
		%\STATE $\hat{\alpha}^{[0]}_i \leftarrow 0$ for $i=1,\dots,N$.
		\FOR{$m = 1,\dots,M$}
		\STATE $\bs^{[m]} \leftarrow$ Run BV algorithm with $\cD$ and $\epsilon$ and measure top-$N$ qubits in standard basis.
		\ENDFOR
		\FOR{$i = 1,\dots,N$}
		\STATE $\hat{\alpha}_i \leftarrow \frac{1}{M}\sum_{m=1}^M s^{[m]}_i$.
		\ENDFOR
		\RETURN $\{ \hat{\alpha}_i \}^{N}_{i=1}$.
	\end{algorithmic}
	\caption{Quantum algorithm to compute influence~\cite{floess13,li15}.}\label{alg:quantum}
\end{algorithm}

\subsection{Analysis}\label{sec:qanal}

A clear difference between Algorithms~\ref{alg:classical} and~\ref{alg:quantum} is the lack of an ``inner loop" in the latter. Moreover, in each (main) iteration of the quantum algorithm, the BV algorithm is executed only once; hence, the Boolean function $f$ is also called just once in each iteration. The overall query complexity of Algorithm~\ref{alg:quantum} is thus $\mathcal{O}(M)$, which is a speed-up over Algorithm~\ref{alg:classical} by a factor of $N$. For example, in the case of the homography estimation instance in Fig.~\ref{fig:bighomography}, this represents a sizeable speed-up factor of $516$.

In some sense, the BV algorithm computes the influences exactly in one invocation of $f$; however, limitations placed by nature allows us to ``access" the results using probabilistic measurements only, thus delivering only approximate solutions. Thankfully, the same arguments in Sec.~\ref{sec:analysis1} can be made for Algorithm~\ref{alg:quantum} to yield the probabilistic error bound~\eqref{eq:converge1} for the results of the quantum version.

As alluded to in Sec.~\ref{sec:quantum}, the computational gain is based on analysing the query complexity of the algorithm~\cite{ambainis18}, i.e., the number of times $U_f$ needs to be invoked, which in turn rests on the knowledge that any polynomial-time classical algorithm can be implemented as a quantum function $f$ efficiently, i.e., with a polynomial number of gates (see~\cite[Chap.~3.25]{nielsen10} and~\cite[Chap.~6]{rieffel14}). In short, the computational analysis presented is consistent with the literature on the analysis of quantum algorithms.

%===========================================================
\section{Conclusions and future work}

We proposed one of the first quantum robust fitting algorithms and established its practical usefulness in the computer vision setting. Future work includes devising quantum robust fitting algorithms that have better speed-up factors and tighter approximation bounds. Implementing the algorithm on a quantum computer will also be pursued.

%------------------------------------------------------------------------- 

%===========================================================
\bibliographystyle{splncs}
\bibliography{quantum_robust_fitting_final}

%this would normally be the end of your paper, but you may also have an appendix
%within the given limit of number of pages
\end{document}